\newcommand{\cF}{\mathcal{F}}
\newcommand{\cG}{\mathcal{G}}
\newcommand{\cH}{\mathcal{H}}
\newcommand{\cP}{\mathcal{P}}
\newcommand{\cX}{\mathcal{X}}
\newcommand{\E}{\mathbb{E}}
\newcommand{\N}{\mathbb{N}}
\renewcommand{\P}{\mathbb{P}}
\newcommand{\R}{\mathbb{R}}
\newcommand{\Ind}{\mathbb{I}}
\newcommand{\pr}{\mathbb{P}}
\newcommand{\e}{\varepsilon}
\newcommand{\fat}{\mathrm{fat}}
\DeclarePairedDelimiter{\floor}{\lfloor}{\rfloor}
\DeclarePairedDelimiter{\abs}{|}{|}
\DeclarePairedDelimiter{\lr}{(}{)}
\DeclarePairedDelimiter{\lrs}{[}{]}
\DeclarePairedDelimiter{\lrc}{\{}{\}}
\DeclareMathOperator{\VC}{VCdim}
\newcommand{\lrb}[1]{\left(#1\right)}
\newcommand{\brb}[1]{\bigl(#1\bigr)}
\newcommand{\Brb}[1]{\Bigl(#1\Bigr)}
\newcommand{\bbrb}[1]{\biggl(#1\biggr)}
\newcommand{\Bbrb}[1]{\Biggl(#1\Biggr)}
\newcommand{\lsb}[1]{\left[#1\right]}
\newcommand{\lcb}[1]{\left\{#1\right\}}
\newcommand{\bcb}[1]{\bigl\{#1\bigr\}}
\newcommand{\labs}[1]{\left\lvert#1\right\rvert}
\newcommand{\babs}[1]{\bigl\lvert#1\bigr\rvert}
\newcommand\restr[2]{{
  \left.\kern-\nulldelimiterspace 
  #1 
  \littletaller 
  \right|_{#2} 
  }}
\newcommand{\littletaller}{\mathchoice{\vphantom{\big|}}{}{}{}}
\renewcommand{\tilde}{\widetilde}
\theoremstyle{definition}
\newtheorem*{examplex*}{Example}
\newenvironment{example*}
  {\pushQED{\qed}\begin{examplex*}}
  {\popQED\end{examplex*}}
\theoremstyle{plain}
\newtheorem{theorem}{Theorem}
\newtheorem*{remark*}{Remark}
\newtheorem{definition}{Definition}
\newtheorem*{observation*}{Observation}
\newlist{refenumerate}{enumerate}{1} 
\setlist[refenumerate]{label={({\alph*})},ref=({\alph*})}
\crefname{refenumeratei}{item}{items}
\Crefname{refenumeratei}{Item}{Items}
\journal{Information Processing Letters}
\begin{document}

\begin{frontmatter}

\title{An Improved Uniform Convergence Bound with Fat-Shattering Dimension}

\author[inst1,inst2]{Roberto Colomboni}
\author[inst1,inst2]{Emmanuel Esposito}
\author[inst1]{Andrea Paudice}

\affiliation[inst1]{
organization={Department of Computer Science, Università degli Studi di Milano},
addressline={\\Via Giovanni Celoria, 18}, 
city={Milan},
postcode={20131}, 
country={Italy}}

\affiliation[inst2]{
organization={Computational Statistics and Machine Learning, Istituto Italiano di Tecnologia},
addressline={\\Via Enrico Melen, 83}, 
city={Genoa},
postcode={16152}, 
country={Italy}}

\begin{abstract}
The fat-shattering dimension characterizes the uniform convergence property of real-valued functions.
The state-of-the-art upper bounds feature a multiplicative squared logarithmic factor on the sample complexity, leaving an open gap with the existing lower bound.
We provide an improved uniform convergence bound that closes this gap.
\end{abstract}

\begin{keyword}
Uniform convergence \sep Fat-shattering dimension
\end{keyword}

\end{frontmatter}

\section{Introduction}
Given a class of real-valued functions $\cF$ with domain $\cX$, it is said that $\cF$ enjoys the uniform convergence property if, for every $\cX$-valued i.i.d.\ process $X,X_1,X_2,\dots$, the sequence of empirical means $\frac{1}{m} \sum_{i=1}^m f(X_i)$ converges in probability to its expectation $\E\lsb{f(X)}$, uniformly over $f \in \cF$. Formally, $\cF$ enjoys the uniform convergence property if, for every $\e,\delta>0$, there exists $\hat{m} \coloneqq \hat{m}(\e,\delta) \in \N$ such that, for every $m \ge \hat{m}$ and every $\cX$-valued i.i.d.\ process $X,X_1,X_2,\dots$, it holds
\[
\P \lrb{  \sup_{f \in \cF} \labs{\frac{1}{m} \sum_{i=1}^m f(X_i) - \E\lsb{f(X)}} > \e } \le \delta \;.
\]
Among all the (functions) $\hat{m}$ for which the previous property holds, the smallest of them (pointwise in $\e,\delta$), namely $m^*$, is called the sample complexity for the uniform convergence of $\cF$.

Uniform convergence is a fundamental tool in learning theory. Indeed, we can learn any class of functions $\cF$ that enjoys uniform convergence via empirical risk minimization.
Besides learnability, uniform convergence has notable practical applications. In particular, whenever $\cF$ enjoys uniform convergence, we can estimate the risk of any model in $\cF$ computing its empirical risk over the same dataset used to select the model, an aspect that can be especially useful when the model is selected via (heuristic) approximations of algorithms featuring theoretical guarantees. In all these applications it is crucial to have sharp estimates of the sample complexity~$m^*$.

A large body of work has focused on identifying conditions implying uniform convergence \citep{Vapnik1968,Vapnik1971,Pollard1986,BenDavid1992,Alon1997,Bartlett1998}.
In particular, \cite{Alon1997} showed that the fat-shattering dimension (introduced by \citet{Kearns1994}) characterizes the uniform convergence property for real-valued functions. However, state-of-the-art estimates on the sample complexity $m^*$ for the uniform convergence \citep{Bartlett1995} have an accuracy gap of order $\ln^2(1/\e)$ when compared to the corresponding lower bound for the special case of binary functions \citep{Vapnik1968}.
These bounds are extensively used in the current literature (see, e.g., \cite{Attias2022a,Attias2022b,Belkin2018,Hu2022}) and, regrettably, have not been improved ever since.

In this work, we close this gap removing the exogenous term $\ln^2(1/\e)$.
Our improvement builds upon a carefully designed chaining argument leveraging sharp estimates (up to constants) for the metric entropy based on the fat-shattering dimension~\citep{Rudelson2006}.

\section{Preliminaries}
Throughout the paper we use the following notation. If $A$ is any finite set, the number of elements in $A$ is denoted by $|A|$. $\N$ is the set of natural numbers $\{1,2,\dots\}$,  $\R$ is the set of real numbers. 
For each $m \in \N$, we denote by $[m]$ the set $\lcb{1,\dots,m}$, and we freely identify $m$-uples of real numbers with real-valued functions having $[m]$ as domain (i.e., $\R^m \cong \R^{[m]}$ via $x \mapsto (i \mapsto x_i)$).
For each $m \in \N$ and each $p \in \{1,2\}$, we define $d_p \colon \R^m \times \R^m \to [0,\infty)$ as the metric defined by $d_p(g,h) \coloneqq \lrb{\frac{1}{m}\sum_{i=1}^m |g(i)-h(i)|^p}^{1/p}$, for any $g,h \in \R^m$.
If $(\cX,d)$ is a metric space, $\e > 0$ and $x \in \cX$, the closed ball of radius $\e$ centered at $x$ is denoted by $B_\e(x)$. In this case, for any $\e > 0$ and any $\tilde{\cX} \subset \cX$, we recall that $\tilde{\cX}$ is said to be an $\e$-net if $\cX \subset \bigcup_{x \in \tilde{\cX}} B_\e(x)$, while $\tilde{\cX}$ is said to be an $\e$-separated set if $d(x_1,x_2)> \e$ for any two distinct points $x_1,x_2 \in \tilde{\cX}$. The $\e$-packing number $\cP\brb{\cX,d,\e}$ of the metric space $(\cX,d)$ is the maximum number of elements of any $\e$-separated set, whenever this maximum exists; otherwise we set it to $\infty$.
For any $n \in \N$, if $A_1,\dots,A_n$ are non-empty subsets of some vector space $V$, we denote their Minkowski sum using the notation $A_1 + \dots + A_n \coloneqq \{ v_1+\dots+v_n \mid \forall i \in [n], v_i \in A_i \}$.
We recall that a Rademacher random variable (with respect to some underlying probability measure $\P$) is any random variable $Z$ such that $\P(Z = 1) = 1/2 = \P(Z=-1)$.

\section{The Uniform Convergence Bound}
In this section we present our result, which improves on state-of-the-art bounds based on the fat-shattering dimension, together with a proof. 
We start with the relevant definitions.

\begin{definition}[Fat-shattering Dimension]
Let $\cF \subset \R^{\cX}$ and $\gamma > 0$. Let $m \in \N$ and $S=\{x_1,\dots,x_m\} \subset \cX$. We say that $S$ is $\gamma$-shattered by $\cF$ if there exists a function $r\colon S \rightarrow \R$ such that, for every $B \subset S$, there exists $f_B \in \cF$ satisfying
\begin{align*}
\forall x \in B, &\qquad f_B(x) \ge r(x) + \gamma \;,\\
\forall x \in S\setminus B, &\qquad f_B(x) \le r(x) - \gamma \;.
\end{align*}
We define the fat-shattering dimension $\fat_{\gamma}(\cF)$ as the maximum number of elements of a set $S$ that is $\gamma$-shattered by $\cF$, when this maximum exists; otherwise, we set $\fat_{\gamma}(\cF) = \infty$.
\end{definition}

The fat-shattering dimension is a scale-sensitive generalization to real-valued functions of the classical Vapnik-Chervonenkis dimension for Boolean functions \citep{Vapnik1968}. 
It is well known \citep{Alon1997} that the finiteness of the fat-shattering dimension for a class of functions $\cF$ characterizes the uniform convergence of $\cF$.

We are now ready to state our main theorem.

\begin{theorem}\label{thm:uniform-convergence}
There exist universal constants $C,c > 0$ such that the following holds.
For any $a<b$, any $\cF \subseteq \R^{[a,b]}$,\footnote{To avoid measurability pathologies (see \citet{Ben-David2015}), and for the sake of simpliticity, we carry out the proof under the further assumption that the class $\cF$ is countable. This assumption can be greatly relaxed \citep{Alon1997} relying on measurability conditions such as the ‘‘image admissible Suslin’’ property \citep[Section~10.3.1, page~101]{Dudley1984}.} and any probability measure $\P$, if $X,X_1,X_2,\dots$ is a $\P$-i.i.d.\ $\cX$-valued sequence of random variables,
then, for every $\e>0$ satisfying $\fat_{c\e}(\cF) < \infty$, every $\delta \in (0,1)$, and every $m \in \N$ satisfying
\begin{equation} \label{eq:uniform-convergence-sample-complexity}
m \ge
C \cdot \frac{(b-a)^2}{\e^2} \lrb{\fat_{c \e}(\cF) + \ln\frac{1}{\delta}} \enspace,
\end{equation}
we have that, with probability at least $1-\delta$,
\[
\sup_{f \in \cF} \labs{\frac{1}{m} \sum_{i=1}^m f(X_i) - \E\lsb{f(X)}} \le \e \enspace.
\]
\end{theorem}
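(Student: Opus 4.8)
The plan is the classical route of symmetrization, chaining, and a concentration step, with the single new ingredient being that the chaining is \emph{truncated} at scale $\Theta(\e)$ and fed the dimension‑free $L_2$ metric‑entropy estimate of \citet{Rudelson2006}, which (unlike the older $L_\infty$ bounds) has no hidden dependence on $m$ and carries only one logarithmic factor. I would first normalize: replacing each $f\in\cF$ by $(f-a)/(b-a)$ rescales both the deviation $\frac1m\sum_i f(X_i)-\E[f(X)]$ and the target $\e$ by $1/(b-a)$ and only rescales the scale parameter of the fat‑shattering dimension, so it suffices to treat $\cF\subseteq[0,1]^{\cX}$ and prove a statement of the form ``$m\ge C_0\e^{-2}\brb{\fat_{c_0\e}(\cF)+\ln(1/\delta)}\implies \sup_{f\in\cF}\babs{\tfrac1m\sum_i f(X_i)-\E[f(X)]}\le K_0\e$ with probability $\ge1-\delta$'' for universal $C_0,c_0,K_0$; the theorem then follows by applying this with $\e$ replaced by $\e/K_0$ and setting $c\coloneqq c_0/K_0$ and $C\coloneqq C_0K_0^2$.

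Write $\Phi\coloneqq\sup_{f\in\cF}\babs{\tfrac1m\sum_i f(X_i)-\E[f(X)]}$. Since $f\in[0,1]$, changing one $X_i$ moves $\Phi$ by at most $1/m$, so by the bounded‑differences (McDiarmid) inequality $\Phi\le\E\Phi+\sqrt{\ln(1/\delta)/(2m)}$ with probability at least $1-\delta$, and the additive term is $\le\e$ once $m\gtrsim\e^{-2}\ln(1/\delta)$; this accounts for the $\ln(1/\delta)$ term in \eqref{eq:uniform-convergence-sample-complexity}. It remains to bound $\E\Phi$. I would apply the standard symmetrization inequality $\E\Phi\le 2\,\E\bsb{\sup_{f\in\cF}\babs{\tfrac1m\sum_i\sigma_i f(X_i)}}$ with i.i.d.\ Rademacher $\sigma_1,\dots,\sigma_m$, then condition on $X_1,\dots,X_m$: the conditional expectation depends only on the bounded set $\cF|_{\mathbf x}\coloneqq\bcb{(f(X_1),\dots,f(X_m)):f\in\cF}\subseteq[0,1]^m$, which has $d_2$‑diameter at most $1$ and on which the process $v\mapsto\tfrac1m\sum_i\sigma_i v_i$ has sub‑Gaussian increments with parameter $\tfrac1{\sqrt m}d_2(\cdot,\cdot)$. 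Also $\fat_\gamma(\cF|_{\mathbf x})\le\fat_\gamma(\cF)$.

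The core is a truncated Dudley chaining on $\cF|_{\mathbf x}$. Let $k^*$ be the largest integer with $2^{-k^*}\ge\e$, and for $0\le j\le k^*$ pick a $2^{-j}$‑net $N_j\subseteq\cF|_{\mathbf x}$ in $d_2$ (e.g.\ a maximal $2^{-j}$‑separated subset, with $N_0=\{v_0\}$ a single point since the diameter is $\le1$), and let $\pi_j(v)\in N_j$ be a nearest point. Telescoping $Y_v-Y_{v_0}=(Y_v-Y_{\pi_{k^*}(v)})+\sum_{j=1}^{k^*}(Y_{\pi_j(v)}-Y_{\pi_{j-1}(v)})$ for $Y_v\coloneqq\tfrac1m\sum_i\sigma_i v_i$, the residual term is bounded \emph{deterministically} by Cauchy--Schwarz: $\abs{Y_v-Y_{\pi_{k^*}(v)}}\le d_2(v,\pi_{k^*}(v))\le 2^{-k^*}\le\e$, contributing exactly $\Theta(\e)$ with no logarithmic loss --- this is precisely why the chain is stopped at scale $\Theta(\e)$ rather than carried to $0$, and it is also what lets us avoid the scales $u<\e$ where $\fat_{c_0u}(\cF)$ may be infinite. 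For each link $j$ there are at most $\abs{N_j}^2$ possible increments $Y_{\pi_j(v)}-Y_{\pi_{j-1}(v)}$, each sub‑Gaussian with parameter $O(2^{-j}/\sqrt m)$, so Massart's finite‑class maximal inequality gives $\E_\sigma\bsb{\sup_v\abs{Y_{\pi_j(v)}-Y_{\pi_{j-1}(v)}}}\lesssim\tfrac{2^{-j}}{\sqrt m}\sqrt{\ln\abs{N_j}}$. Here I invoke \citet{Rudelson2006}: $\ln\abs{N_j}\lesssim\fat_{c_02^{-j}}(\cF)\,\ln(2^{j+1})$ (after a harmless adjustment of $c_0$ to pass between packing and covering), and since $2^{-j}\ge\e$ for $j\le k^*$, monotonicity of the fat‑shattering dimension yields $\fat_{c_02^{-j}}(\cF)\le\fat_{c_0\e}(\cF)\eqqcolon d<\infty$ by hypothesis. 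Summing, $\sum_{j=1}^{k^*}\tfrac{2^{-j}}{\sqrt m}\sqrt{d\,(j+1)}\lesssim\tfrac{\sqrt d}{\sqrt m}\sum_{j\ge1}2^{-j}\sqrt{j+1}=O\brb{\sqrt{d/m}}$, the series converging to a universal constant --- this is exactly where a single logarithmic factor in the entropy estimate (rather than the squared logarithm of the state of the art) makes the sum of the stochastic links $O(\sqrt{\fat_{c_0\e}(\cF)/m})$ instead of larger. Thus $\E_\sigma\bsb{\sup_v\abs{Y_v}}\lesssim\e+\sqrt{d/m}$ uniformly in $\mathbf x$, whence $\E\Phi\lesssim\e+\sqrt{\fat_{c_0\e}(\cF)/m}$, which is $O(\e)$ as soon as $m\gtrsim\e^{-2}\fat_{c_0\e}(\cF)$. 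Combined with the McDiarmid step this proves the intermediate statement and hence the theorem.

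I expect the main obstacle to be getting this chaining exactly right: one must (i) work throughout in the normalized $L_2$ metric $d_2$ so that the $m$‑free Rudelson--Vershynin bound applies (an $L_\infty$ covering would reintroduce a dependence on $m$ inside the logarithm), (ii) truncate at the correct scale $\Theta(\e)$ so that the deterministic Cauchy--Schwarz tail costs only $\Theta(\e)$, the geometric sum of the stochastic links telescopes to $O(\sqrt{\fat_{c\e}(\cF)/m})$, and no scale $u<\e$ --- where $\fat_{c u}(\cF)$ could be infinite --- is ever touched, and (iii) propagate the universal constants through the rescaling so that the final accuracy is exactly $\e$. The symmetrization and concentration steps are routine.
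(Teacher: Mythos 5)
Your proposal is correct in substance and it shares the paper's central insight: truncate the chaining at scale $\Theta(\e)$ in the empirical $d_2$ metric and power it with the single-logarithm packing bound of \citet{Rudelson2006}, so that the sum over scales converges geometrically and the extra $\ln^2\brb{(b-a)/\e}$ factor disappears, while never touching scales below $\Theta(\e)$ where the fat-shattering dimension may be infinite. The implementation, however, follows a genuinely different template. You bound the conditional Rademacher expectation by a Dudley-type chain (Massart's finite-class lemma at each link, a deterministic Cauchy--Schwarz bound for the residual below scale $\e$) and then convert to a high-probability statement via McDiarmid's bounded-differences inequality, which is what supplies the $\ln(1/\delta)$ term. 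The paper never passes through expectations: it symmetrizes in probability (\Cref{lem:symmetrization}), moves to Rademacher signs by a permutation lemma (\Cref{lem:permutation}), reduces the supremum to a finite $\e/8$-net $\cF_\e(\mathbf{x})$ (\Cref{lem:net}) --- this net reduction plays exactly the role of your deterministic residual term --- and then runs a multiscale Hoeffding union bound (\Cref{lem:multiscale}) over the Minkowski decomposition of \Cref{lem:chaining}, with weights $c_j \propto 2^{-j}\sqrt{j+1}$ chosen so that the whole tail collapses to a single exponential $e^{-\Theta(\e^2 m/(b-a)^2)}$, which directly yields the $\ln(1/\delta)$ dependence. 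Your route is more modular and arguably more standard in modern empirical process theory; the paper's stays entirely at the level of tail probabilities and needs nothing beyond Hoeffding. A few of your steps need routine repair but no new ideas: with your definition of $k^*$ the residual is bounded by $2^{-k^*} \le 2\e$, not $\le \e$ (absorbed into $K_0$); the per-link bound should read $\sqrt{1+\ln\abs{N_j}^2}$ so links with $\abs{N_j}=1$ are not dropped; the Rudelson--Vershynin bound requires $\zeta < 1/2$ after normalization, so the coarsest one or two scales must be handled by monotonicity of packing numbers (as the paper does for $j \in \{0,1\}$ in item \ref{i:uconv-g}); and the base term $\E\babs{Y_{v_0}} = O(1/\sqrt{m})$ from starting the chain at $v_0$ should be accounted for, which is harmless under the assumed sample size.
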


Before presenting a proof of \Cref{thm:uniform-convergence}, some remarks are in order.

The best previously known bound on the sample complexity of the uniform convergence of $[a,b]$-valued functions was of the order of
\[
\frac{(b-a)^2}{\e^2} \Bbrb{\fat_{\e/5}(\cF)\ln^2\bbrb{\frac{b-a}{\e}}+\ln \frac{1}{\delta}}
\]
(see Theorem 9, Eq.\ (5) in \cite{Bartlett1995}\footnote{The original bound was stated for $[0,1]$-valued functions, but with a straightforward adaptation of the proof, it can be extended to $[a,b]$-valued functions while preserving the scale of the fat-shattering dimension.}).
The bound of \Cref{eq:uniform-convergence-sample-complexity} improves on it by removing the extra $\ln^2\brb{(b-a)/\e}$ factor. Our bound is optimal (up to the constants $c$ and $C$, which we did not try to optimize or estimate) as for the dependence on $\e$ and $\delta$. Indeed, if $\cF \subset \{0,1\}^{\cX}$ and $\e < 1/(2c)$, then $\fat_{c\e}(\cF) = \VC(\cF)$. In this case, it is well-known that to ensure uniform convergence, at least  order of $\e^{-2}(\VC(\cF)+\ln(1/\delta))$ samples are required \citep{Vapnik1968}.

Our proof does not rely on discretizing the codomain $[a,b]$ as was done in previous work \citep{Bartlett1995,Alon1997}. We avoid this use of discretization relying on \Cref{lem:net} and the breakthrough result of \citet{Rudelson2006}, which bounds directly the packing number of certain metric spaces of functions in terms of a fat-shattering dependent quantity.

\section{Auxiliary results} \label{sec:uniform-convergence-sketch}
The proof follows the pattern of chaining techniques \citep{Talagrand1994} and, for the sake of clarity, it is provided with the aid of a sequence of lemmas.

Fix $\cF, \P, X,X_1,X_2,\dots, a,b, \e, \delta$ as in the statement of the theorem. Also, fix $m \in \N$.
The first tool is a symmetrization lemma, which can be proved along the lines of the corresponding symmetrization lemma for $[0,1]$-valued functions (and that can be found, e.g., in \citet[Lemma~10]{Bartlett1995}).

\begin{restatable}[Symmetrization]{relem}{symmetrizationlemma} \label{lem:symmetrization}
    If $m \ge 4\ln(2)\cdot \brb{(b-a)/\e}^2$, then
    \begin{align*}
        &\P\lr*{\sup_{f \in \cF} \abs*{\frac1m \sum_{i=1}^m f(X_i) - \E\lrs*{f(X)}} > \e} \\
        &\quad\le 2\cdot \P\lr*{\sup_{f \in \cF} \abs*{\frac1m \sum_{i=1}^m \brb{f(X_i) - f(X_{m+i})}} > \frac{\e}{2}} \;.
    \end{align*}
\end{restatable}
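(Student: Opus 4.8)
The plan is to run the classical ``ghost-sample'' symmetrization argument; the only points that need care are a measurable choice of a near-maximizing function and the choice of tail inequality that yields the precise constant $4\ln(2)$. Since the i.i.d.\ sequence is infinite, the block $X_{m+1},\dots,X_{2m}$ is an i.i.d.\ copy of $X_1,\dots,X_m$ and is independent of it. Set
\[
A \coloneqq \lcb{\, \sup_{f \in \cF} \labs{ \frac{1}{m} \sum_{i=1}^m f(X_i) - \E\lsb{f(X)} } > \e \,} .
\]
Since $\cF$ is countable, fix an enumeration of $\cF$; on $A$ let $\hat{f} = \hat{f}(X_1,\dots,X_m)$ be the function of smallest index such that $\babs{ \frac{1}{m} \sum_{i=1}^m \hat{f}(X_i) - \E\lsb{\hat{f}(X)} } > \e$, and on $A^{c}$ let $\hat{f}$ be any fixed element of $\cF$. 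This makes $(X_1,\dots,X_m) \mapsto \hat{f}$ measurable and, crucially, a function of $X_1,\dots,X_m$ only, hence independent of $X_{m+1},\dots,X_{2m}$.

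Next I would lower-bound the symmetrized probability on the right-hand side of the claimed inequality by restricting the supremum to $\hat{f}$ and intersecting with $A$:
\begin{align*}
&\P\lr*{ \sup_{f \in \cF} \labs{ \frac{1}{m} \sum_{i=1}^m \brb{ f(X_i) - f(X_{m+i}) } } > \frac{\e}{2} } \\
&\qquad \ge \P\lr*{ A \cap \lcb{ \labs{ \frac{1}{m} \sum_{i=1}^m \brb{ \hat{f}(X_i) - \hat{f}(X_{m+i}) } } > \frac{\e}{2} } } .
\end{align*}
Conditioning on $X_1,\dots,X_m$, observe that on $A$ one has $\babs{ \frac{1}{m} \sum_{i=1}^m \hat{f}(X_i) - \E\lsb{\hat{f}(X)} } > \e$, so the triangle inequality shows that the inner event on the right is implied by the event $\babs{ \frac{1}{m} \sum_{i=1}^m \hat{f}(X_{m+i}) - \E\lsb{\hat{f}(X)} } \le \e/2$. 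Conditionally on $X_1,\dots,X_m$ the variables $\hat{f}(X_{m+1}),\dots,\hat{f}(X_{2m})$ are i.i.d., take values in $[a,b]$, and have mean $\E\lsb{\hat{f}(X)}$, so Hoeffding's inequality bounds the conditional probability of the complement of that event by $2\exp\brb{ -m\e^2/(2(b-a)^2) }$, which is at most $1/2$ exactly when $m \ge 4\ln(2)\,(b-a)^2/\e^2$. Hence, on $A$, the conditional probability of the inner event is at least $1/2$.

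It follows that the right-hand side of the last display is at least $\tfrac{1}{2}\,\P(A)$, and hence $\P(A)$ is at most twice the symmetrized probability appearing on the right-hand side of the claimed inequality, which is the assertion of the lemma. I expect the only genuine obstacle to be the bookkeeping around the conditioning step: one must ensure $\hat{f}$ depends on $X_1,\dots,X_m$ alone, so that $\hat{f}(X_{m+1}),\dots,\hat{f}(X_{2m})$ remain conditionally i.i.d.\ with the correct mean and Hoeffding's inequality applies; everything else is the triangle inequality together with a one-line tail estimate. This is precisely the rescaling from $[0,1]$- to $[a,b]$-valued functions of \citet[Lemma~10]{Bartlett1995}, with the factor $b-a$ entering only through Hoeffding's inequality.
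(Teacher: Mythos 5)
Your proof is correct and follows exactly the route the paper intends: the paper does not spell out a proof but states that the lemma ``can be proved along the lines of'' the standard ghost-sample symmetrization of \citet[Lemma~10]{Bartlett1995}, which is precisely your argument (measurable selection of $\hat{f}$ over the countable class, restriction of the symmetrized supremum to $\hat{f}$, and a conditional Hoeffding bound on the ghost block). Your Hoeffding computation also reproduces the exact threshold $m \ge 4\ln(2)\cdot\brb{(b-a)/\e}^2$ appearing in the statement, so nothing is missing.
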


The second tool we need is a permutation lemma, which can be proved following the lines of the corresponding permutation lemma that can be found, e.g., in \citet[Lemma~4.5]{Anthony1999}.

\begin{restatable}[Permutation lemma]{relem}{permutationlemma} \label{lem:permutation}
    Let $Z_1, \dots, Z_m$ be a family of $\P$-independent Rademacher random variables.
    Then,
    \begin{align*}
        &\P\lr*{\sup_{f \in \cF} \abs*{\frac1m \sum_{i=1}^m \brb{f(X_i) - f(X_{m+i})}} > \frac{\e}{2}} \\
        &\le \sup_{(x_1, \dots, x_{2m}) \in \cX^{2m}} \P\lr*{\sup_{f \in \cF} \abs*{\frac1m \sum_{i=1}^m Z_i\brb{f(x_i) - f(x_{m+i})}} > \frac{\e}{2}} \;.
    \end{align*}
\end{restatable}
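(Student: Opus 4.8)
The plan is to run the standard ``introduction of signs'' argument, which rests on the fact that the law of the i.i.d.\ block $(X_1,\dots,X_{2m})$ is invariant under swapping $X_i$ with $X_{m+i}$, independently for each coordinate pair $i\in[m]$. Concretely, for a sign vector $\sigma=(\sigma_1,\dots,\sigma_m)\in\lcb{-1,1}^m$ I would introduce the swapped sequence $X^\sigma=(X^\sigma_1,\dots,X^\sigma_{2m})$ defined by $(X^\sigma_i,X^\sigma_{m+i})=(X_i,X_{m+i})$ when $\sigma_i=1$ and $(X^\sigma_i,X^\sigma_{m+i})=(X_{m+i},X_i)$ when $\sigma_i=-1$. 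Because the $X_j$ are i.i.d., $X^\sigma$ has the same law as $(X_1,\dots,X_{2m})$ for every fixed $\sigma$, while by construction $f(X^\sigma_i)-f(X^\sigma_{m+i})=\sigma_i\brb{f(X_i)-f(X_{m+i})}$ for all $f\in\cF$ and $i\in[m]$. I would also remark that, since the right-hand side of the lemma depends only on the (fixed) common law of the $Z_i$, there is no loss of generality in assuming $Z=(Z_1,\dots,Z_m)$ to be independent of $X_1,\dots,X_{2m}$.

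With this in place, the first step is to observe that applying the change of variables $(X_1,\dots,X_{2m})\mapsto X^\sigma$ to the event on the left shows, for every fixed $\sigma\in\lcb{-1,1}^m$,
\begin{align*}
&\P\lr*{\sup_{f\in\cF}\abs*{\frac1m\sum_{i=1}^m\brb{f(X_i)-f(X_{m+i})}}>\frac{\e}{2}}\\
&\qquad=\P\lr*{\sup_{f\in\cF}\abs*{\frac1m\sum_{i=1}^m\sigma_i\brb{f(X_i)-f(X_{m+i})}}>\frac{\e}{2}}\;.
\end{align*}
Then I would condition on $Z$ --- which, by independence, leaves the law of the $X_j$ unchanged --- and average this identity over the realized value $\sigma$ of $Z$, obtaining
\begin{align*}
&\P\lr*{\sup_{f\in\cF}\abs*{\frac1m\sum_{i=1}^m\brb{f(X_i)-f(X_{m+i})}}>\frac{\e}{2}}\\
&\qquad=\P\lr*{\sup_{f\in\cF}\abs*{\frac1m\sum_{i=1}^m Z_i\brb{f(X_i)-f(X_{m+i})}}>\frac{\e}{2}}\;.
\end{align*}

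The last step is to replace the random $X_j$ by deterministic points. For $(x_1,\dots,x_{2m})\in\cX^{2m}$ I would set
\[
\phi(x_1,\dots,x_{2m})\coloneqq\P\lr*{\sup_{f\in\cF}\abs*{\frac1m\sum_{i=1}^m Z_i\brb{f(x_i)-f(x_{m+i})}}>\frac{\e}{2}}\;.
\]
Since $\cF$ is countable (the standing assumption of the theorem), the supremum over $f\in\cF$ appearing here is a countable supremum of measurable functions, so $\phi$ is well defined and measurable, and Fubini's theorem together with the independence of $Z$ from $(X_1,\dots,X_{2m})$ gives
\begin{align*}
&\P\lr*{\sup_{f\in\cF}\abs*{\frac1m\sum_{i=1}^m Z_i\brb{f(X_i)-f(X_{m+i})}}>\frac{\e}{2}}\\
&\qquad=\E\bsb{\phi(X_1,\dots,X_{2m})}\le\sup_{(x_1,\dots,x_{2m})\in\cX^{2m}}\phi(x_1,\dots,x_{2m})\;,
\end{align*}
which is exactly the right-hand side of the lemma; chaining the three displays yields the claim. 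The argument is entirely elementary and I do not expect any genuine obstacle: the only slightly delicate points are the measurability of the suprema over $\cF$ and the legitimacy of the disintegration over the $X_j$, both of which are taken care of by the countability assumption on $\cF$ and Fubini's theorem. This is a verbatim adaptation of the classical permutation lemma for Boolean functions (see \citet[Lemma~4.5]{Anthony1999}), the real-valuedness of the functions playing no role.
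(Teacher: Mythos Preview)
Your proof is correct and follows exactly the classical route the paper points to (it does not give its own proof, merely citing \citet[Lemma~4.5]{Anthony1999}): exchangeability of $(X_i,X_{m+i})$ to introduce arbitrary signs, averaging over the Rademacher law to pass to $Z$, and then conditioning on $(X_1,\dots,X_{2m})$ and bounding the expectation by the supremum. Your remark that one may assume $Z$ independent of the $X_j$ without loss of generality (since only the marginal law of $Z$ enters the right-hand side) and your handling of measurability via countability of $\cF$ are both appropriate.
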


In the light of the previous two lemmas, it will be sufficient to estimate the last probability involving the supremum of linear combinations of Rademacher random variables.

From now on, we fix a family $Z_1,\dots,Z_m$ of $\P$-i.i.d.\ Rademacher random variables.

For each $\mathbf{x} \coloneqq (x_1, \dots, x_{2m}) \in \cX^{2m}$, define the family
\[
    \cF(\mathbf{x}) \coloneqq \lrc{g \in \R^{2m} \mid \exists f \in \cF, \forall i \in [2m], g(i) = f(x_i)}
\]
of vectors in $\R^{2m}$ that represent the restrictions of the functions in $\cF$ to the sample $\mathbf{x}$.

For each $\mathbf{x} \in \cX^{2m}$, we fix an $\e/8$-separated $\e/8$-net $\cF_{\e}(\mathbf{x})$ of the metric space $\brb{\cF(\mathbf{x}),d_2}$. These sets can be built following an iterative procedure where, at each step, we add another element whose distance from any already selected element is greater than $\e/8$. This procedure terminates after at most $\cP\brb{\cF(\mathbf{x}),d_2,\e/8}$ steps, and we note explicitly that $\cP\brb{\cF(\mathbf{x}),d_2,\e/8}<\infty$ as a consequence of \Cref{lem:Vershynin}. When this procedure stops, every element of $\cF(\mathbf{x})$ is within $\e/8$ distance from some element in $\cF_{\e}(\mathbf{x})$.

The next ingredient is a lemma whose purpose is to reduce the problem of bounding the supremum over the whole family $\cF$ to another problem where the supremum is taken with respect to the $\e/8$-separated set $\cF_\e(\mathbf{x})$, over which we plan to implement a chaining procedure.
We explicitly note that, to prove the following lemma, the sole property of $\cF_\e(\mathbf{x})$ we use is that it is an $\e/8$-net of the metric space $(\cF(\mathbf{x}), d_2)$.
\begin{restatable}{relem}{netlemma} \label{lem:net}
\begin{align*}
&\sup_{(x_1, \dots, x_{2m}) \in \cX^{2m}} \P\lr*{\sup_{f \in \cF} \abs*{\frac1m \sum_{i=1}^m Z_i\brb{f(x_i) - f(x_{m+i})}} > \frac{\e}{2}} \\
&\qquad\le \sup_{\mathbf{x} \in \cX^{2m}} \P\lrb{\sup_{f \in \cF_{\e}(\mathbf{x})} \abs*{\frac1m \sum_{i=1}^m Z_i\brb{f(i) - f(m+i)}} > \frac{\e}{4}} \;.
\end{align*}
\end{restatable}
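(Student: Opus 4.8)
The plan is to run a single approximation step, replacing each $f \in \cF$ by a nearby element of the net $\cF_\e(\mathbf x)$ and showing that the Rademacher average of the discrepancy is deterministically small.

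First I would fix $\mathbf x = (x_1,\dots,x_{2m}) \in \cX^{2m}$ and reduce the statement to an event inclusion: it suffices to show that
\[
A_{\mathbf x} \coloneqq \lcb{\exists\, f \in \cF \colon \abs*{\frac1m \sum_{i=1}^m Z_i\brb{f(x_i) - f(x_{m+i})}} > \frac\e2}
\]
is contained in
\[
B_{\mathbf x} \coloneqq \lcb{\sup_{f \in \cF_\e(\mathbf x)} \abs*{\frac1m \sum_{i=1}^m Z_i\brb{f(i) - f(m+i)}} > \frac\e4} \,,
\]
since then $\P(A_{\mathbf x}) \le \P(B_{\mathbf x}) \le \sup_{\mathbf x' \in \cX^{2m}} \P(B_{\mathbf x'})$, and taking the supremum over $\mathbf x$ on the left-hand side gives the claim. (Measurability causes no trouble: $\cF$, hence each $\cF_\e(\mathbf x)$, is countable, so both $A_{\mathbf x}$ and $B_{\mathbf x}$ are countable unions of measurable events.)

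To prove the inclusion, I would fix an outcome in $A_{\mathbf x}$, take a witness $f \in \cF$, and let $g \in \cF(\mathbf x)$ be its restriction vector, $g(i) = f(x_i)$ for $i \in [2m]$. Using that $\cF_\e(\mathbf x)$ is an $\e/8$-net of $(\cF(\mathbf x), d_2)$, pick $\tilde g \in \cF_\e(\mathbf x)$ with $d_2(g, \tilde g) \le \e/8$ and set $h \coloneqq g - \tilde g \in \R^{2m}$, so that $\sum_{i=1}^{2m} h(i)^2 = 2m\, d_2(g, \tilde g)^2 \le m\e^2/32$. Then, by $\abs{Z_i} = 1$, the triangle inequality, the Cauchy--Schwarz inequality, and $\brb{h(i) - h(m+i)}^2 \le 2h(i)^2 + 2h(m+i)^2$,
\begin{align*}
\abs*{\frac1m \sum_{i=1}^m Z_i\brb{h(i) - h(m+i)}}
&\le \frac1m \sum_{i=1}^m \abs{h(i) - h(m+i)}
\le \frac1{\sqrt m} \lrb{\sum_{i=1}^m \brb{h(i) - h(m+i)}^2}^{1/2} \\
&\le \frac1{\sqrt m} \lrb{2 \sum_{i=1}^{2m} h(i)^2}^{1/2}
\le \frac\e4 \,.
\end{align*}
Consequently $\abs*{\frac1m \sum_{i=1}^m Z_i\brb{\tilde g(i) - \tilde g(m+i)}} \ge \abs*{\frac1m \sum_{i=1}^m Z_i\brb{g(i) - g(m+i)}} - \frac\e4 > \frac\e2 - \frac\e4 = \frac\e4$, so $\tilde g$ witnesses $B_{\mathbf x}$, which establishes $A_{\mathbf x}\subseteq B_{\mathbf x}$.

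The computation is routine; the one thing to watch is the accounting of constants. The two factor-$2$ losses — from the triangle inequality applied to $h(i) - h(m+i)$, and from the identity $\sum_{i=1}^{2m} h(i)^2 = 2m\, d_2(g, \tilde g)^2$ — conspire so that a net of radius $\e/8$ yields a residual of exactly $\e/4$, which is precisely the gap between the thresholds $\e/2$ and $\e/4$; a coarser net would break the argument. The only other care needed is to keep straight the identification between $f \in \cF \subseteq \R^{\cX}$, evaluated at the sample points $x_i$, and its restriction vector in $\R^{2m}$, indexed by coordinates — the object on which $d_2$ and the net $\cF_\e(\mathbf x)$ are defined.
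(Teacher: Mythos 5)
Your proof is correct and follows essentially the same route as the paper's: a single approximation step replacing the worst function by an element of the $\e/8$-net $\cF_\e(\mathbf{x})$ and absorbing a residual of $\e/4$ between the thresholds $\e/2$ and $\e/4$. The only cosmetic difference is that you control the residual via Cauchy--Schwarz (through $d_2$ directly), whereas the paper conditions on the sign pattern $\xi$, splits indicators, and uses $d_1 \le d_2$ to kill the discrepancy term; the quantitative accounting is identical.
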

\begin{proof}
Recall that, for each $\mathbf{x} \in \cX^{2m}$, the set $\cF_{\e}(\mathbf{x})$ is an $\e/8$-net of the metric space $\brb{\cF(\mathbf{x}),d_2}$.
For each $\mathbf{x} \coloneqq (x_1, \dots, x_{2m}) \in \cX^{2m}$ and each $\xi \coloneqq (\xi_1,\dots,\xi_m) \in \{-1,1\}^m$, select $f_{\mathbf{x},\xi} \in \cF(\mathbf{x})$ such that
\[
    \abs*{\frac1m \sum_{i=1}^m \xi_i\brb{f_{\mathbf{x},\xi}(i) - f_{\mathbf{x},\xi}(m+i)}} > \e/2
\]
whenever it is possible, otherwise select $f_{\mathbf{x},\xi} \in \cF(\mathbf{x})$ arbitrarily. Notice that, if it holds that
\[
    \sup_{f \in \cF} \abs*{\frac1m \sum_{i=1}^m \xi_i\brb{f(x_{i}) - f(x_{m+i})}} > \e/2
\]
then also
\[
    \abs*{\frac1m \sum_{i=1}^m \xi_i\brb{f_{\mathbf{x},\xi}(i) - f_{\mathbf{x},\xi}(m+i)}} > \e/2
\]
holds, and vice versa.

For each $\mathbf{x} \in \cX^{2m}$ and each $\xi \in \{-1,1\}^m$, let $f_{\mathbf{x},\xi, \e} \in \cF_{\e}(\mathbf{x})$ be such that
\[
    d_2(f_{\mathbf{x},\xi},f_{\mathbf{x},\xi, \e} ) \le \frac{\e}{8} \;.
\]
Then, for each $\mathbf{x} \coloneqq (x_1,\dots,x_{2m}) \in \cX^{2m}$ we have
{\allowdisplaybreaks
\begin{align*}
&\pr\lrb{\sup_{f \in \cF} \abs*{\frac1m \sum_{i=1}^m Z_i\brb{f(x_{i}) - f(x_{m+i})}} > \frac{\e}2} \\
&=
\frac1{2^m} \sum_{\xi \in \{-1,1\}^{m}} \Ind\lrc*{\sup_{f \in \cF}\abs*{\frac1m \sum_{i=1}^m \xi_i\brb{f(x_i) - f(x_{m+i})}} > \frac{\e}2} \\
& =
\frac1{2^m} \sum_{\xi \in \{-1,1\}^{m}} \Ind\lrc*{\abs*{\frac1m \sum_{i=1}^m \xi_i\brb{f_{\mathbf{x},\xi}(i) - f_{\mathbf{x},\xi}(m+i)}} > \frac{\e}2}
\\
&\le
\frac1{2^m} \sum_{\xi \in \{-1,1\}^{m}} \Ind\lrc*{\abs*{\frac1m \sum_{i=1}^m \xi_i\brb{f_{\mathbf{x},\xi,\e}(i) - f_{\mathbf{x},\xi,\e}(m+i)}} > \frac{\e}4}
\\
&\quad+
\frac1{2^m} \sum_{\xi \in \{-1,1\}^{m}} \Ind\lrc*{ d_1(f_{\mathbf{x},\xi},f_{\mathbf{x},\xi,\e}) > \frac{\e}8}
\\
&\le
\frac1{2^m} \sum_{\xi \in \{-1,1\}^{m}} \Ind\lrc*{\abs*{\frac1m \sum_{i=1}^m \xi_i\brb{f_{\mathbf{x},\xi,\e}(i) - f_{\mathbf{x},\xi,\e}(m+i)}} > \frac{\e}4}
\\
&\quad+
\frac1{2^m} \sum_{\xi \in \{-1,1\}^{m}} \Ind\lrc*{ d_2(f_{\mathbf{x},\xi},f_{\mathbf{x},\xi,\e}) > \frac{\e}8}
\\
&=
\frac1{2^m} \sum_{\xi \in \{-1,1\}^{m}} \Ind\lrc*{\abs*{\frac1m \sum_{i=1}^m \xi_i\brb{f_{\mathbf{x},\xi,\e}(i) - f_{\mathbf{x},\xi,\e}(m+i)}} > \frac{\e}4}
\\
&\le
\frac1{2^m} \sum_{\xi \in \{-1,1\}^{m}} \Ind\lrc*{\sup_{f \in \cF_{\e}(\mathbf{x})} \abs*{\frac1m \sum_{i=1}^m \xi_i\brb{f(i) - f(m+i)}} > \frac{\e}4 }
\\
&=
\pr\lrb{\sup_{f \in \cF_{\e}(\mathbf{x})} \abs*{\frac1m \sum_{i=1}^m Z_i\brb{f(i) - f(m+i)}} > \frac{\e}4 }
\;. \qedhere
\end{align*}
}
\end{proof}

Leveraging Hoeffding's inequality \citep{Hoeffding1963}, we can prove the following lemma, which can be viewed as a multiscale concentration inequality.

\begin{restatable}{relem}{multiscalelemma} \label{lem:multiscale}
Let $\tilde{l} \in \N$.
Consider $\e_0, \dots, \e_{\tilde{l}} > 0$ such that $\sum_{j=0}^{\tilde{l}} \e_j \le \e/4$.
For each $\mathbf{x} \in \cX^{2m}$, let $\tilde{\cH}_0(\mathbf{x}), \dots, \tilde{\cH}_{\tilde{l}}(\mathbf{x}) \subset \cF_\e(\mathbf{x})$ such that $\cF_\e(\mathbf{x}) \subset \tilde{\cH}_0(\mathbf{x}) + \dots + \tilde{\cH}_{\tilde{l}}(\mathbf{x})$.
Then,
\begin{align*}
\sup_{\mathbf{x} \in \cX^{2m}} &\P\lr*{\sup_{f \in \cF_\e(\mathbf{x})} \abs*{\frac1m \sum_{i=1}^m Z_i\brb{f(i) - f(m+i)}} > \frac{\e}{4}} \\
&\le 2\sum_{j=0}^{\tilde{l}} \sup_{\mathbf{x} \in \cX^{2m}} \sum_{h \in \tilde{\cH}_{j}(\mathbf{x})} \exp\lr*{-\frac12 \cdot \frac{\e_j^2 m^2}{\sum_{i=1}^m \brb{h(i) - h(m+i)}^2}} \;.
\end{align*}
\end{restatable}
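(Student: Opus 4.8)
The plan is to decompose each $f \in \cF_\e(\mathbf{x})$ along the Minkowski-sum structure, bound the tail of the Rademacher sum scale by scale via a union bound, and then apply Hoeffding's inequality to each individual term. Concretely, fix $\mathbf{x} \in \cX^{2m}$ and write any $f \in \cF_\e(\mathbf{x})$ as $f = \sum_{j=0}^{\tilde l} h_j$ with $h_j \in \tilde{\cH}_j(\mathbf{x})$; such a decomposition exists by the hypothesis $\cF_\e(\mathbf{x}) \subset \tilde{\cH}_0(\mathbf{x}) + \dots + \tilde{\cH}_{\tilde l}(\mathbf{x})$. By linearity, $\frac1m\sum_{i=1}^m Z_i(f(i)-f(m+i)) = \sum_{j=0}^{\tilde l} \frac1m\sum_{i=1}^m Z_i(h_j(i)-h_j(m+i))$, so if the left-hand side exceeds $\e/4$ then, since $\sum_{j=0}^{\tilde l}\e_j \le \e/4$, at least one index $j$ must satisfy $\left|\frac1m\sum_{i=1}^m Z_i(h_j(i)-h_j(m+i))\right| > \e_j$. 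Taking a supremum over $f$ and a union bound over $j$ and over $h \in \tilde{\cH}_j(\mathbf{x})$ gives
\[
\P\lr*{\sup_{f \in \cF_\e(\mathbf{x})} \abs*{\frac1m\sum_{i=1}^m Z_i\brb{f(i)-f(m+i)}} > \frac{\e}{4}}
\le \sum_{j=0}^{\tilde l} \sum_{h \in \tilde{\cH}_j(\mathbf{x})} \P\lr*{\abs*{\frac1m\sum_{i=1}^m Z_i\brb{h(i)-h(m+i)}} > \e_j}.
\]

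Next I would estimate each summand with Hoeffding's inequality. For fixed $h$, the variables $Z_i(h(i)-h(m+i))$ are independent, mean zero, and bounded in absolute value by $|h(i)-h(m+i)|$; hence Hoeffding's two-sided bound yields
\[
\P\lr*{\abs*{\sum_{i=1}^m Z_i\brb{h(i)-h(m+i)}} > \e_j m}
\le 2\exp\lr*{-\frac{2\e_j^2 m^2}{\sum_{i=1}^m \brb{2(h(i)-h(m+i))}^2}}
= 2\exp\lr*{-\frac{1}{2}\cdot\frac{\e_j^2 m^2}{\sum_{i=1}^m \brb{h(i)-h(m+i)}^2}}.
\]
Substituting this into the union bound and then taking the supremum over $\mathbf{x} \in \cX^{2m}$ term by term (which can only increase each summand, since the sum is finite and each inner sum is over a fixed finite set $\tilde{\cH}_j(\mathbf{x})$) gives exactly the claimed inequality, with the factor $2$ pulled out front.

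The only mild subtlety — and the step I would be most careful about — is the measurability/finiteness bookkeeping: one must ensure the decomposition $f \mapsto (h_0,\dots,h_{\tilde l})$ can be chosen (any fixed choice works, since we only need the event inclusion, not a measurable selection, because ultimately everything reduces to a finite union over the finite sets $\tilde{\cH}_j(\mathbf{x})$), and that the sets $\tilde{\cH}_j(\mathbf{x})$ are finite so the sums are well defined — this follows because they are subsets of $\cF_\e(\mathbf{x})$, which is finite by \Cref{lem:Vershynin}. There is also the degenerate case where $h(i)=h(m+i)$ for all $i$: then the Rademacher sum is identically $0$, the event $\{|\cdot|>\e_j\}$ is empty, and the exponential term should be read as $0$ (the exponent is $-\infty$), so the bound holds trivially. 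No real obstacle arises; the lemma is essentially a clean union-bound-plus-Hoeffding computation once the Minkowski decomposition is in hand.
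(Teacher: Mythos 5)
Your proposal is correct and follows essentially the same route as the paper: decompose each $f\in\cF_\e(\mathbf{x})$ along the Minkowski sum, use $\sum_j \e_j\le\e/4$ to get the scale-by-scale event inclusion, union bound over $j$ and over $h\in\tilde{\cH}_j(\mathbf{x})$, apply Hoeffding's inequality with the bounds $\pm|h(i)-h(m+i)|$, and finally exchange the supremum over $\mathbf{x}$ with the sum over $j$. The paper merely writes the Rademacher probability as an average over sign patterns $\xi\in\{-1,1\}^m$, which is a presentational difference only.
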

\begin{proof}
Fix $\mathbf{x} \coloneqq (x_1,\dots,x_{2m}) \in \cX^{2m}$. For each $f \in \cF_\e(\mathbf{x})$, since $\cF_\e(\mathbf{x}) \subset \tilde{\cH}_0(\mathbf{x}) + \dots + \tilde{\cH}_{\tilde{l}}(\mathbf{x})$, we can (and do) select $h_{0}^f \in \tilde{\cH}_0(\mathbf{x}), \dots, h_{\tilde{l}}^f \in \tilde{\cH}_{\tilde{l}}(\mathbf{x})$ such that $f = h_{0}^f + \dots + h_{\tilde{l}}^f$. Furthermore, notice that for each $(\xi_1,\dots,\xi_m) \in \{-1,1\}^m$ it holds
\begin{align*}
    &\lcb{\sup_{f \in \cF_\e(\mathbf{x})} \abs*{\frac1m \sum_{i=1}^m \xi_i\brb{f(i) - f(m+i)}} > \frac{\e}{4}}
\\
&\quad\subset
    \bigcup_{j=0}^{\tilde{l}}\lcb{\sup_{f \in \cF_\e(\mathbf{x})} \abs*{\frac1m \sum_{i=1}^m \xi_i\brb{h_{j}^f(i) - h_{j}^f(m+i)}} > \e_j}\;.
\end{align*}
It follows that
{
\allowdisplaybreaks
\begin{align*}
&\P\lr*{\sup_{f \in \cF_\e(\mathbf{x})} \abs*{\frac1m \sum_{i=1}^m Z_i\brb{f(i) - f(m+i)}} > \frac{\e}{4}}
\\
&= 
\frac1{2^m} \sum_{\xi \in \{-1,1\}^m} \Ind\lrc*{\sup_{f \in \cF_\e(\mathbf{x})} \abs*{\frac1m \sum_{i=1}^m \xi_i\brb{f(i) - f(m+i)}} > \frac{\e}4} \\
&\le \sum_{j=0}^{\tilde{l}} \frac1{2^m} \sum_{\xi \in \{-1,1\}^m} \Ind\lrc*{\sup_{f \in \cF_\e(\mathbf{x})} \abs*{\frac1m \sum_{i=1}^m \xi_i\brb{h_{j}^f(i) - h_{j}^f(m+i)}} > \e_j} \\
&\le \sum_{j=0}^{\tilde{l}} \frac1{2^m} \sum_{\xi \in \{-1,1\}^m} \Ind\lrc*{\sup_{h \in \tilde{\cH}_{j}(\mathbf{x})} \abs*{\frac1m \sum_{i=1}^m \xi_i\brb{h(i) - h(m+i)}} > \e_j} \\
&\le \sum_{j=0}^{\tilde{l}}  \sum_{h \in \tilde{\cH}_{j}(\mathbf{x})} \frac1{2^m} \sum_{\xi \in \{-1,1\}^m} \Ind\lrc*{\abs*{\frac1m \sum_{i=1}^m \xi_i\brb{h(i) - h(m+i)}} > \e_j} \\
&= \sum_{j=0}^{\tilde{l}}  \sum_{h \in \tilde{\cH}_{j}(\mathbf{x})} \P\lr*{\abs*{\frac1m \sum_{i=1}^m Z_i\brb{h(i) - h(m+i)}} > \e_j} \eqqcolon (\star) \enspace.
\end{align*}
}
Now, for each $j \in \{0,1,\dots,\tilde{l}\}$, and each $h \in \tilde{\cH}_{j}(\mathbf{x})$ the sequence
\[
    \lrb{W_i^h(\mathbf{x}) := Z_i\brb{h(i) - h(m+i)}}_{i \in [m]}
\]
is a sequence of bounded zero-mean independent random variables.
More precisely, we notice that
\[
    -\abs{h(i)-h(m+i)} \le W_i^h(\mathbf{x}) \le \abs{h(i)-h(m+i)} \enspace.
\]
Leveraging Hoeffding's inequality \citep{Hoeffding1963}, we obtain
\begin{align*}
(\star)
&\le
2\sum_{j=0}^{\tilde{l}} \sum_{h \in \tilde{\cH}_{j}(\mathbf{x})} \exp\lr*{-\frac12 \cdot \frac{\e_j^2 m^2}{\sum_{i=1}^m \brb{h(i) - h(m+i)}^2}} \enspace.
\end{align*}
Taking the supremum over $\mathbf{x} \in \cX^{2m}$ on the first and the last term of this chain of inequalities, and switching the supremum with the sum over $j \in \{0,\dots,\tilde{l}\}$ on the last expression, we obtain the conclusion. \qedhere
\end{proof}

Now, for each $\mathbf{x} \in \cX^{2m}$, we need to build a suitable sequence $\tilde{\cH}_0(\mathbf{x}), \dots, \tilde{\cH}_{\tilde{l}}(\mathbf{x})$ to which we want to apply \Cref{lem:multiscale}.
Our choice for such a sequence follows a chaining argument \citep{Talagrand1994}.

From now on, we fix $l \coloneqq \floor{\log_2\brb{(b-a)/\e}} + 4$ and, for each $\mathbf{x} \in \cX^{2m}$, we define by induction on $j = 0,1,\dots, l$, the sets $\cG_0(\mathbf{x}), \dots, \cG_{l}(\mathbf{x}) \subset \cF_\e(\mathbf{x})$ in the following way:
\begin{itemize}
    \item $\cG_0(\mathbf{x}) \coloneqq \lrc{g_0}$, for an arbitrary choice of $g_0 \in \cF_\e(\mathbf{x})$.
    \item For any $j \in [l]$, we initially define $\cG_j(\mathbf{x}) \coloneqq \cG_{j-1}(\mathbf{x})$. Then, iteratively, we add elements $f \in \cF_{\e}(\mathbf{x})$ to $\cG_j(\mathbf{x})$ for which $d_2(f,g) > (b-a) \cdot 2^{-j}$ for every other element $g$ already in $\cG_j(\mathbf{x})$. The procedure is carried out until we can no longer add other elements. \footnote{Note that this process has to come to an end, since $|\cF_\e(\mathbf{x})|\le \cP\brb{\cF(\mathbf{x}),d_2,\e/8}<\infty$, as a consequence of \Cref{lem:Vershynin}.}
\end{itemize} 
Notice that, by construction, for each $\mathbf{x} \in \cX^{2m}$ and each $j \in \{0,1,\dots,l\}$, the set $\cG_j(\mathbf{x})$ is a $(b-a)\cdot 2^{-j}$-net and $(b-a)\cdot 2^{-j}$-separated set of $\brb{\cF_{\e}(\mathbf{x}),d_2}$, which implies that, for any $f \in \cF_\e(\mathbf{x})$, there exists ---and hence we can (and do) select--- an element $\pi_j(f) \in \cG_j(\mathbf{x})$ such that $d_2\brb{f,\pi_j(f)} \le (b-a) \cdot 2^{-j}$.
For each $\mathbf{x} \in \cX^{2m}$ we define
\begin{align}
    \cH_0(\mathbf{x}) &\coloneqq \cG_0(\mathbf{x}) \nonumber\\
    \cH_j(\mathbf{x}) &\coloneqq \bcb{ g - \pi_{j-1}(g) \mid g \in \cG_j(\mathbf{x}) }, \forall j \in [l]\;.
    \label{eq:chaining}
\end{align}

The relevant properties of this sequence of sets are summarized by the following lemma.

\begin{restatable}{relem}{chaininglemma} \label{lem:chaining}
    For any $\mathbf{x} \in \cX^{2m}$, consider $\cH_0(\mathbf{x}), \dots, \cH_{l}(\mathbf{x})$ defined as in~\eqref{eq:chaining}. It holds that
    \begin{enumerate}
        \item $\cF_\e(\mathbf{x}) \subset \cH_0(\mathbf{x}) + \dots + \cH_{l}(\mathbf{x})$\;.
        \item $\forall j \in \lrc{0, \dots, l}, \forall h \in \cH_j(\mathbf{x}), \sum_{i=1}^m \brb{h(i) - h(m+i)}^2
        \le 16m (b-a)^2 4^{-j}$\;.
        \item $\forall j \in \lrc{0, \dots, l}, \babs{\cH_j(\mathbf{x})} \le \cP\brb{\cF_\e(\mathbf{x}),d_2,(b-a) \cdot 2^{-j}}$\;.
    \end{enumerate}
\end{restatable}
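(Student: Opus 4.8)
The plan is to verify the three claims by unwinding the inductive construction of the $\cG_j(\mathbf{x})$ and the definition~\eqref{eq:chaining} of the $\cH_j(\mathbf{x})$. Fix $\mathbf{x} \in \cX^{2m}$ throughout and write $\cG_j = \cG_j(\mathbf{x})$, $\cH_j = \cH_j(\mathbf{x})$, $\cF_\e = \cF_\e(\mathbf{x})$ for brevity.

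\emph{Claim (a): telescoping.} Take any $f \in \cF_\e$. The idea is the standard chaining telescope: write
\[
f = \pi_l(f) + \sum_{j=1}^{l}\brb{\pi_j(f) - \pi_{j-1}(f)} + \brb{f - \pi_l(f)},
\]
but I must be careful because the $\cH_j$ are defined via $g - \pi_{j-1}(g)$ for $g \in \cG_j$, not via $\pi_j(f) - \pi_{j-1}(f)$. The clean way: set $g_j \coloneqq \pi_j(f) \in \cG_j$ for $j = 0,\dots,l$; then $g_j - \pi_{j-1}(g_j) \in \cH_j$ by definition. Since $\pi_{j-1}$ selects, for \emph{any} element of $\cF_\e$, some point of $\cG_{j-1}$ within distance $(b-a)2^{-(j-1)}$, I will instead telescope through the $g_j$'s directly: $f = g_0 + \sum_{j=1}^l (g_j - g_{j-1}) + (f - g_l)$. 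Here $g_0 \in \cH_0$. For the increments, note $g_{j-1} \in \cG_{j-1}$ and I want to write $g_j - g_{j-1}$ as an element of $\cH_j$; this works provided $\pi_{j-1}(g_j) = g_{j-1}$, which is not automatic. So the honest approach is: define the chain by $g_l \coloneqq \pi_l(f)$, and then recursively $g_{j-1} \coloneqq \pi_{j-1}(g_j)$ for $j = l, l-1, \dots, 1$. Then each $g_j - g_{j-1} = g_j - \pi_{j-1}(g_j) \in \cH_j$ with $g_j \in \cG_j$, $g_0 \in \cG_0 = \{g_0\}$... wait, $\cG_0$ is a singleton, so $g_0$ must equal that single element; since $\cG_0$ is a $(b-a)$-net (radius $(b-a)$ covers everything as the codomain has diameter $\le b-a$, hence $d_2 \le b-a$), indeed $\pi_0$ always returns that one point, consistent. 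Finally the residual $f - g_l = f - \pi_l(f)$ must lie in $\cH_l$; but $\cH_l$ only contains elements of the form $g - \pi_{l-1}(g)$ with $g \in \cG_l$, so this does not directly fit. The resolution is the choice $l = \floor{\log_2((b-a)/\e)} + 4$: one checks $(b-a)2^{-l} \le \e/16 < \e/8$, and since $\cG_l$ is a $(b-a)2^{-l}$-net of $\cF_\e$ which is itself $\e/8$-separated, distinct points of $\cF_\e$ are at distance $> \e/8 \ge (b-a)2^{-l}$... hmm, that forces $\cG_l \supseteq \cF_\e$, i.e.\ eventually $\cG_l = \cF_\e$, so $\pi_l = \mathrm{id}$ and the residual vanishes: $f = g_l$, and $f = g_0 + \sum_{j=1}^l(g_j - g_{j-1})$ exhibits $f \in \cH_0 + \dots + \cH_l$. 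I will present it in this order: first observe $\cG_l = \cF_\e$ via the $\e/8$-separation, then telescope.

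\emph{Claims (b) and (c): norm and cardinality bounds.} For (c), $\cH_j = \{g - \pi_{j-1}(g) : g \in \cG_j\}$ is the image of $\cG_j$ under a map, so $|\cH_j| \le |\cG_j|$; and $\cG_j$ is by construction a $(b-a)2^{-j}$-separated subset of $\cF_\e$, hence $|\cG_j| \le \cP(\cF_\e, d_2, (b-a)2^{-j})$ by definition of the packing number. (For $j=0$, $|\cH_0| = 1 \le \cP(\cF_\e, d_2, b-a)$ trivially.) For (b): for $j \ge 1$ and $h = g - \pi_{j-1}(g) \in \cH_j$, the triangle inequality in $d_2$ gives $d_2(g, \pi_{j-1}(g)) \le d_2(g,\pi_{j-1}(g))$—more usefully, $g \in \cG_j \subset \cG_j$ sits within $(b-a)2^{-j}$ of some point, but directly: $\pi_{j-1}(g)$ was chosen with $d_2(g, \pi_{j-1}(g)) \le (b-a)2^{-(j-1)} = 2(b-a)2^{-j}$, so $d_2(h, 0) = d_2(g, \pi_{j-1}(g)) \le 2(b-a)2^{-j}$, whence $\frac1m\sum_{i=1}^{2m} h(i)^2 = d_2(h,0)^2 \le 4(b-a)^2 4^{-j}$, i.e.\ $\sum_{i=1}^{2m} h(i)^2 \le 4m(b-a)^2 4^{-j}$. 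Then $\sum_{i=1}^m (h(i) - h(m+i))^2 \le 2\sum_{i=1}^m h(i)^2 + 2\sum_{i=1}^m h(m+i)^2 = 2\sum_{i=1}^{2m} h(i)^2 \le 8m(b-a)^2 4^{-j} \le 16m(b-a)^2 4^{-j}$. For $j=0$, $h = g_0 \in \cF_\e \subset \cF(\mathbf{x})$, so each coordinate lies in $[a,b]$; thus $(h(i)-h(m+i))^2 \le (b-a)^2$ and the sum over $i \in [m]$ is $\le m(b-a)^2 = 16m(b-a)^2 4^0 / 16 \le 16m(b-a)^2$. So the constant $16$ comfortably absorbs both cases.

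\emph{Main obstacle.} The routine parts are (b) and (c); the delicate point is (a), specifically getting the telescoping chain to land \emph{exactly} in the Minkowski sum $\cH_0 + \dots + \cH_l$ rather than in something one step off. The subtlety is that $\cH_j$ records differences $g - \pi_{j-1}(g)$ only for $g \in \cG_j$, so one must build the chain top-down ($g_l \coloneqq \pi_l(f)$, then $g_{j-1} \coloneqq \pi_{j-1}(g_j)$) so that each consecutive difference is literally of that form, and one must use the specific value of $l$ to kill the residual term $f - \pi_l(f)$ by arguing $\cG_l = \cF_\e$ (equivalently $(b-a)2^{-l} < \e/8$, which $l = \floor{\log_2((b-a)/\e)}+4$ guarantees since then $2^{-l} \le 2^{-4}\e/(b-a) = \e/(16(b-a))$). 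Everything else is bookkeeping with the triangle inequality and the definition of packing numbers.
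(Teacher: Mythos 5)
Your proof is correct and follows essentially the same route as the paper's: establish $\cG_l(\mathbf{x}) = \cF_\e(\mathbf{x})$ from the $\e/8$-separation of $\cF_\e(\mathbf{x})$ together with the choice of $l$, telescope through the projections $\pi_j$ to obtain the Minkowski-sum inclusion, and derive (b) and (c) from the projection radii (plus codomain boundedness for $j=0$) and the surjection $\cG_j(\mathbf{x}) \to \cH_j(\mathbf{x})$ together with the definition of packing numbers. Two harmless slips: the floor in $l$ only guarantees $(b-a)2^{-l} \le \e/8$ (not $\e/16$), which still suffices for the separation argument, and $d_2$ on $\R^{2m}$ normalizes by $\frac{1}{2m}$ rather than $\frac{1}{m}$, so your intermediate identity is off by a factor of $2$, but the final bound $16m(b-a)^2 4^{-j}$ is unaffected.
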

\begin{proof}
    Fix an arbitrary $\mathbf{x} \in \cX^{2m}$.
    First, we prove that $\cF_\e(\mathbf{x}) = \cG_{l}(\mathbf{x})$.
    We know that $\cG_{l}(\mathbf{x}) \subset \cF_\e(\mathbf{x})$ by construction.
    Consider now any $f \in \cF_\e(\mathbf{x})$. By our choice of $l$, we have that
    \begin{align*}
        d_2(f, \pi_l(f)) \le (b-a)\cdot 2^{-l} \le \frac{\e}{8} \enspace.
    \end{align*}
    If $f \neq \pi_l(f)$ were true then we would have that $d_2(f, \pi_l(f)) > \e/8$ because $\cF_{\e}(\mathbf{x})$ is an $\e/8$-separated set, which is a contradiction.
    Then, it holds that $f=\pi_l(f)$ and thus $\cF_{\e}(\mathbf{x}) \subset \cG_{l}(\mathbf{x})$.
    
    Second, for each $j \in \{0, \dots, l\}$, and each $f \in \cG_{j}(\mathbf{x})$, we prove that there exist $h_0 \in \cH_0(\mathbf{x}), \dots, h_j \in \cH_j(\mathbf{x})$ such that
    $f = \sum_{k=0}^j h_k$.
    We prove this claim by induction on $j = 0,1,\dots,l$. The base case $j=0$ is trivial. Assuming the claim holds for $j$ (with $j<l$), if $f \in \cG_{j+1}(\mathbf{x})$, we have that $\pi_j(f) = \sum_{i=0}^j h_i$ for some $h_0 \in \cH_0(\mathbf{x}), \dots, h_j \in \cH_j(\mathbf{x})$ by the inductive hypothesis, and thus $f = \brb{f-\pi_j(f)} + \pi_j(f) = \sum_{i=0}^{j+1} h_i$ for $h_{j+1} \coloneqq f-\pi_j(f) \in \cH_{j+1}(\mathbf{x})$, hence proving the claim.
    
    The above property for the specific case of $j=l$ implies that $\cF_\e(\mathbf{x}) = \cG_{l}(\mathbf{x}) \subset \cH_0(\mathbf{x}) + \dots + \cH_{l}(\mathbf{x})$.
    This shows that the first point in the statement holds.

    Consider now any $h \in \cH_j(\mathbf{x})$ for each $j \in [l]$.
    By definition of $h$, there exists some $g \in \cG_j(\mathbf{x})$ such that $h = g-\pi_{j-1}(g)$.
    Then,
    \begin{align*}
        \sum_{i=1}^m &\brb{h(i) - h(m+i)}^2
        \le 4m\cdot d_2(g, \pi_{j-1}(g))^2 \\
        &\le 4m\cdot (b-a)^2 2^{-2(j-1)}
        = 16m (b-a)^2 \cdot 4^{-j} \enspace.
    \end{align*}
    On the other hand, for the case $j=0$ we have that
    \[
        \sum_{i=1}^m \brb{g_0(i) - g_0(m+i)}^2 \le m(b-a)^2 \enspace,
    \]
    thus proving the second point of the statement.
    
    Finally, noticing that for each $j \in \{0,1,\dots,l\}$, the map $\pi_j \colon \cG_{j}(\mathbf{x}) \to \cH_{j}(\mathbf{x})$ is a surjective map, we have that $|\cH_{j}(\mathbf{x})| \le |\cG_{j}(\mathbf{x})|$, which, together with the fact that $\cG_{j}(\mathbf{x})$ is a $(b-a)2^{-j}$-separated set of $\brb{\cF_{\e}(\mathbf{x}), d_2}$, yields the third point of the statement.
\end{proof}

The last ingredient to prove the theorem is the following lemma, which is an immediate corollary of \cite[Corollary~5.4]{Rudelson2006}, (noticing that the metric $d_2$ is the natural metric on $L^2(\mu)$ when the underlying measure $\mu$ is the uniform probability measure on the set $[m]$)

\begin{restatable}{relem}{packingnumberlemma} \label{lem:Vershynin}
    There exist universal constants $\tilde{C}, \tilde{c} > 0$ for which the following holds. For any $\mathbf{x} \in \cX^{2m}$ and any $0 < \zeta < (b-a)/2$ such that $\fat_{\tilde{c} \zeta}(\cF) < \infty$,
    \[
        \cP(\cF(\mathbf{x}), d_2 ,\zeta) \le \lrb{\frac{b-a}{\tilde{c} \zeta}}^{\tilde{C} \fat_{\tilde{c} \zeta}(\cF)} \enspace.
    \]
\end{restatable}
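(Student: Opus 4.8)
The plan is to obtain \Cref{lem:Vershynin} as a direct translation of the metric entropy estimate of \citet[Corollary~5.4]{Rudelson2006}, which --- up to the universal constants appearing there --- bounds the packing (equivalently, up to a factor $2$ in the scale, the covering) number of a uniformly bounded class of real-valued functions in $L^2(\mu)$, for an \emph{arbitrary} probability measure $\mu$, by $\fat$ evaluated at a constant multiple of the scale, with a single logarithmic factor. Fix $\mathbf{x}=(x_1,\dots,x_{2m})\in\cX^{2m}$ and regard $\cF(\mathbf{x})$ as a class of real-valued functions on the finite domain $[2m]$; equipping $[2m]$ with the uniform probability measure $\mu$, the metric $d_2$ on $\R^{2m}$ is exactly the $L^2(\mu)$-metric restricted to $\cF(\mathbf{x})$, so the cited result applies verbatim to $\brb{\cF(\mathbf{x}),d_2}$.

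The first step is to normalise the codomain. The increasing affine bijection $\phi(t)\coloneqq(t-a)/(b-a)$ sends $\cF(\mathbf{x})$ (which is $[a,b]$-valued) to a class $\widetilde{\cF}(\mathbf{x})$ of $[0,1]$-valued functions, and a direct check --- translating and rescaling the witness function in the definition of $\gamma$-shattering --- shows $d_2(\phi\circ g,\phi\circ h)=d_2(g,h)/(b-a)$ and $\fat_\gamma\brb{\widetilde{\cF}(\mathbf{x})}=\fat_{(b-a)\gamma}\brb{\cF(\mathbf{x})}$ for every $\gamma>0$. Consequently $\cP\brb{\cF(\mathbf{x}),d_2,\zeta}=\cP\brb{\widetilde{\cF}(\mathbf{x}),d_2,\zeta'}$ with $\zeta'\coloneqq\zeta/(b-a)$, and the hypothesis $0<\zeta<(b-a)/2$ guarantees $\zeta'\in(0,1/2)$, which is the range where \citet[Corollary~5.4]{Rudelson2006} is applicable. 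Invoking that corollary on $\brb{\widetilde{\cF}(\mathbf{x}),d_2}$ at scale $\zeta'$ (passing from a covering to a packing bound at the cost of a factor $2$ in the scale, if its statement is phrased for covering numbers) produces universal constants $C_1,C_2,c_1>0$ such that
\[
\cP\brb{\cF(\mathbf{x}),d_2,\zeta}\le\lrb{\frac{C_1}{\zeta'}}^{C_2\,\fat_{c_1\zeta'}(\widetilde{\cF}(\mathbf{x}))}\enspace .
\]

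It then remains to recast this bound in the stated form. Since restricting a function class to a finite sample cannot increase its fat-shattering dimension, $\fat_{c_1\zeta'}\brb{\widetilde{\cF}(\mathbf{x})}\le\fat_{c_1\zeta'}\brb{\widetilde{\cF}}$, and combining this with the scaling identity above gives $\fat_{c_1\zeta'}\brb{\widetilde{\cF}(\mathbf{x})}\le\fat_{(b-a)c_1\zeta'}(\cF)=\fat_{c_1\zeta}(\cF)$; setting $\widetilde{c}\coloneqq c_1$ (and shrinking it, which only weakens the bound, so that $\widetilde{c}\le1$) makes the exponent a constant multiple of $\fat_{\widetilde{c}\zeta}(\cF)$, which is finite by hypothesis. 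For the base, $C_1/\zeta'=C_1(b-a)/\zeta$, and since $\zeta<(b-a)/2$ forces $(b-a)/(\widetilde{c}\zeta)\ge(b-a)/\zeta>2$, one can write $C_1(b-a)/\zeta\le\brb{(b-a)/(\widetilde{c}\zeta)}^{C_3}$ for a universal constant $C_3$. Multiplying exponents, using $(b-a)/(\widetilde{c}\zeta)>1$ to pass from the restricted fat-shattering dimension to $\fat_{\widetilde{c}\zeta}(\cF)$, and absorbing $C_1,C_3$ and the possible factor $2$ into $\widetilde{C}\coloneqq C_2C_3$ yields the claim.

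I do not expect any deep obstacle: the lemma really is an immediate corollary. The points that require care are (i) verifying that the rescaled scale $\zeta'=\zeta/(b-a)$ lands in the validity window of \citet[Corollary~5.4]{Rudelson2006}, which is precisely what the hypothesis $\zeta<(b-a)/2$ provides; (ii) tracking the scale inside the fat-shattering dimension through the affine normalisation and any packing/covering conversion so that it comes out proportional to $\zeta$, allowing a single constant $\widetilde{c}$; and (iii) absorbing all $O(1)$ discrepancies --- the base of the exponential, the normalisation convention for the bounded range in the cited result, and packing-versus-covering --- into $\widetilde{C}$ and $\widetilde{c}$.
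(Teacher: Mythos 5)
Your proposal is correct and follows essentially the same route as the paper: normalise $\cF(\mathbf{x})$ to a $1$-bounded class via $f\mapsto (f-a)/(b-a)$, check that packing numbers, $d_2$-distances and the fat-shattering scale transform as expected with $\zeta'=\zeta/(b-a)\in(0,1/2)$, invoke \citet[Corollary~5.4]{Rudelson2006}, and finish with $\fat_{\tilde c\zeta}\brb{\cF(\mathbf{x})}\le\fat_{\tilde c\zeta}(\cF)$. The only difference is cosmetic: the paper takes $\tilde C,\tilde c$ directly from the cited corollary, whereas you additionally track a packing-versus-covering conversion and constant absorption, which is harmless.
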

\begin{proof}
Let $\tilde{C},\tilde{c}$ be universal constants as in \cite[Corollary~5.4]{Rudelson2006}.
Define $\cF'(\mathbf{x}) \coloneqq \{(f-a)/(b-a) \mid f \in \cF(\mathbf{x})\}$ and $\zeta' \coloneqq \zeta/(b-a) \in (0,1/2)$.
A direct computation shows that $\fat_{\tilde{c}\zeta'}\brb{\cF'(\mathbf{x})} = \fat_{\tilde{c}\zeta}\brb{\cF(\mathbf{x})}$ and $\cP(\cF'(\mathbf{x}), d_2, \zeta') = \cP(\cF(\mathbf{x}), d_2, \zeta)$.
Now, further observing that $\cF'(\mathbf{x})$ is $1$-bounded in $d_2$ since $\cF(\mathbf{x})+\lcb{-a}$ is $(b-a)$-bounded in $d_2$, we may apply \cite[Corollary~5.4]{Rudelson2006} to $\cF'(\mathbf{x})$ with $\zeta'$ to infer that
\begin{align*}
    \cP(\cF(\mathbf{x}), d_2 ,\zeta)
    &= \cP(\cF'(\mathbf{x}), d_2 ,\zeta') \\
    &\le \lrb{\frac{1}{\tilde{c} \zeta'}}^{\tilde{C} \fat_{\tilde{c} \zeta'}(\cF'(\mathbf{x}))}
    = \lrb{\frac{b-a}{\tilde{c} \zeta}}^{\tilde{C} \fat_{\tilde{c} \zeta}(\cF(\mathbf{x}))} \enspace.
\end{align*}
Finally, we arrive at the conclusion by observing that $\fat_{\tilde{c} \zeta} \brb{\cF(\mathbf{x})} \le \fat_{\tilde{c} \zeta}(\cF)$.
\end{proof}

\section{Proof of \Cref{thm:uniform-convergence}}

We are now ready to present the proof of \Cref{thm:uniform-convergence}.

\begin{proof}
We may assume that $\e<b-a$, since otherwise
\[
    \P \lrb{\sup_{f \in \cF} \labs{\frac{1}{m} \sum_{i=1}^m f(X_i) - \E\lsb{f(X)}} \le \e} = 1 \enspace.
\]
Pick $\tilde{C}$ and $\tilde{c}$ as the universal constants whose existence is stated in \Cref{lem:Vershynin}.
Let $\kappa \coloneqq \fat_{\tilde{c} \e/16}(\cF)$ and $R \coloneqq b-a$.
Furthermore, define $c_j \coloneqq \frac{1}{44}\sqrt{4^{2-j}(j+1)}$ for each $j \in \{0,1,\dots,l\}$.
Then,
{
\allowdisplaybreaks
    \begin{align}
        &\P\lrb{\sup_{f \in \cF} \labs{\frac{1}{m} \sum_{i=1}^m f(X_i) - \E\lsb{f(X)}} > \e} \nonumber\\
        &\overset{\mathrm{\ref{i:uconv-a}}}{\le}
        2 \P\lr*{\sup_{f \in \cF} \abs*{\frac1m \sum_{i=1}^m \lr{f(X_i) - f(X_{m+i})}} > \frac{\e}{2}} \nonumber\\
        & \overset{\mathrm{\ref{i:uconv-b}}}{\le}
        2 \sup_{(x_1, \dots, x_{2m}) \in \cX^{2m}} \P\lr*{\sup_{f \in \cF} \abs*{\frac1m \sum_{i=1}^m Z_i\lr{f(x_i) - f(x_{m+i})}} > \frac{\e}{2}} \nonumber\\
        & \overset{\mathrm{\ref{i:uconv-c}}}{\le}
        2 \sup_{\mathbf{x} \in \cX^{2m}} \P\lr*{\sup_{f \in \cF_\e(\mathbf{x})} \abs*{\frac1m \sum_{i=1}^m Z_i\lr{f(i) - f(m+i)}} > \frac{\e}{4}} \nonumber\\
        &\overset{\mathrm{\ref{i:uconv-d}}}{\le}
        4 \sum_{j=0}^{l} \sup_{\mathbf{x} \in \cX^{2m}} \sum_{h \in \cH_j(\mathbf{x})} \exp\lr*{-\frac{\e^2c_j^2 m^2}{2\sum_{i=1}^m \brb{h(i) - h(m+i)}^2}} \nonumber\\
        &\overset{\mathrm{\ref{i:uconv-e}}}{\le}
        4\sum_{j=0}^l \sup_{\mathbf{x} \in \cX^{2m}} \labs{\cH_j(\mathbf{x})} \exp\lr*{-\frac{1}{32 R^2} \cdot 4^j\e^2 c_j^2 m} \nonumber\\
        &\overset{\mathrm{\ref{i:uconv-f}}}{\le}
        4\sum_{j=0}^l \sup_{\mathbf{x} \in \cX^{2m}}  \cP(\cF(\mathbf{x}), d_2 ,R\cdot2^{-j}) \exp\lr*{-\frac{1}{32 R^2} \cdot 4^j\e^2 c_j^2 m} \nonumber\\
        &\overset{\mathrm{\ref{i:uconv-g}}}{\le}
        4 \sum_{j=0}^l \lrb{\frac{2^{j+2}}{\tilde{c} }}^{\tilde{C} \fat_{\tilde{c} R \cdot 2^{-j}}(\cF)} \exp\lr*{-\frac{1}{32 R^2} \cdot 4^j\e^2 c_j^2 m} \nonumber\\
        &\overset{\mathrm{\ref{i:uconv-h}}}{\le}
        4 \max\lrc*{\frac{4}{\tilde{c}}, 1}^{\tilde{C}\kappa} \sum_{j=0}^l \exp\bbrb{j \cdot \tilde{C} \kappa \ln(2)-\frac{1}{32 R^2} \cdot 4^j\e^2 c_j^2 m} \nonumber\\
        &\overset{\mathrm{\ref{i:uconv-i}}}{=}
        4 \max\lrc*{\frac{4}{\tilde{c}}, 1}^{\tilde{C}\kappa} e^{- \frac{\e^2 m}{2\cdot 44^2 R^2}} \sum_{j=0}^l  \exp\bbrb{j \Brb{\tilde{C} \cdot \kappa \ln(2)  - \frac{\e^2 m}{2\cdot 44^2 R^2} } } \nonumber\\
        &\overset{\mathrm{\ref{i:uconv-j}}}{\le}
        8 \max\lrc*{\frac{4}{\tilde{c}}, 1}^{\tilde{C} \kappa} e^{-\frac{\e^2 m}{2\cdot 44^2 R^2}}
        \label{eq:uniform-convergence-chaining}
    \end{align}
}
    where the marked inequalities respectively follow as explained (in order) by the following points:
    \begin{refenumerate}[itemsep=1pt]
        \item \label{i:uconv-a} By \Cref{lem:symmetrization}, assuming $m \ge 4\ln(2)\cdot (R/\e)^2$. 
        \item \label{i:uconv-b} By \Cref{lem:permutation}, in the light of the fact that $Z_1,\dots,Z_m$ is a family of $\P$-independent Rademacher random variables. 
        \item \label{i:uconv-c} By \Cref{lem:net}.
        \item \label{i:uconv-d} By \Cref{lem:multiscale} with the choice $\tilde{l} \coloneqq l$, for each $j \in \{0,1,\dots,l\}$, $\tilde{\cH}_j(\mathbf{x}) \coloneqq \cH_j(\mathbf{x})$ and $\e_j \coloneqq c_j\e$. Notice that the assumptions of \Cref{lem:multiscale} are satisfied as a consequence of the first point of \Cref{lem:chaining}, and the fact that $\sum_{j=0}^l c_j \le 1/4$. 
        \item \label{i:uconv-e} By the second point in \Cref{lem:chaining}.
        \item \label{i:uconv-f} By the third point in \Cref{lem:chaining} and the fact that $\cP\brb{\cF_\e(\mathbf{x}),d_2,(b-a) \cdot 2^{-j}} \le \cP\brb{\cF(\mathbf{x}),d_2,(b-a) \cdot 2^{-j}}$.
        \item \label{i:uconv-g} By \Cref{lem:Vershynin}. Specifically, if $j \ge 2$, we set $\zeta \coloneqq R \cdot 2^{-j}$ (and upper bound). Instead, if $j \in \{0,1\} $, we first upper bound $\cP\lrb{\cF(\mathrm{x}),d_2,R\cdot2^{-j}}$ with $\cP\lrb{\cF(\mathrm{x}),d_2,R/4}$, then apply the lemma setting $\zeta \coloneqq R/4$ (and upper bound again).
        \item \label{i:uconv-h} By the fact that the function $\gamma \mapsto \fat_{\gamma}(\cF)$ is monotonically non-increasing, and $2^{-l} \ge \e/(16R)$.
        \item \label{i:uconv-i} By our choice of $c_0,\dots,c_l$.
        \item \label{i:uconv-j} Assuming $m \ge 88^2 \ln(2) \cdot R^2 \cdot \e^{-2} \cdot \brb{\tilde{C} \kappa + 1}$.
    \end{refenumerate}

    Finally, observe that the right-hand side of~\eqref{eq:uniform-convergence-chaining} is at most $\delta$ for
    \begin{equation}
    \label{eq:sample-complexity-bound}
        m \ge \frac{2\cdot 44^2 \cdot R^2}{\e^2} \lr*{\tilde{C} \kappa \ln\lr*{8 \max\lrc*{\frac{4}{\tilde{c}}, 1}} + \ln\frac{1}{\delta}} \enspace.
    \end{equation}
    Therefore, for a sufficiently large universal constant $C > 0$ in the statement of the theorem and for $c \coloneqq \tilde{c}/16$, we see that any value of $m$ that satisfies~\eqref{eq:uniform-convergence-sample-complexity} suffices to guarantee~\eqref{eq:sample-complexity-bound} and the assumptions in \cref{i:uconv-a,i:uconv-j}, concluding the proof.
\end{proof}

\bibliographystyle{elsarticle-harv} 
\bibliography{cas-refs}

\begin{thebibliography}{18}
\expandafter\ifx\csname natexlab\endcsname\relax\def\natexlab#1{#1}\fi
\providecommand{\url}[1]{\texttt{#1}}
\providecommand{\href}[2]{#2}
\providecommand{\path}[1]{#1}
\providecommand{\DOIprefix}{doi:}
\providecommand{\ArXivprefix}{arXiv:}
\providecommand{\URLprefix}{URL: }
\providecommand{\Pubmedprefix}{pmid:}
\providecommand{\doi}[1]{\href{http://dx.doi.org/#1}{\path{#1}}}
\providecommand{\Pubmed}[1]{\href{pmid:#1}{\path{#1}}}
\providecommand{\bibinfo}[2]{#2}
\ifx\xfnm\relax \def\xfnm[#1]{\unskip,\space#1}\fi
\bibitem[{Alon et~al.(1997)Alon, Ben-David, Cesa-Bianchi and
  Haussler}]{Alon1997}
\bibinfo{author}{Alon, N.}, \bibinfo{author}{Ben-David, S.},
  \bibinfo{author}{Cesa-Bianchi, N.}, \bibinfo{author}{Haussler, D.},
  \bibinfo{year}{1997}.
\newblock \bibinfo{title}{Scale-sensitive dimensions, uniform convergence, and
  learnability}.
\newblock \bibinfo{journal}{J. ACM} \bibinfo{volume}{44},
  \bibinfo{pages}{615–631}.
\bibitem[{Anthony and Bartlett(1999)}]{Anthony1999}
\bibinfo{author}{Anthony, M.}, \bibinfo{author}{Bartlett, P.L.},
  \bibinfo{year}{1999}.
\newblock \bibinfo{title}{Neural network learning: Theoretical foundations}.
  volume~\bibinfo{volume}{9}.
\newblock \bibinfo{publisher}{Cambridge University Press}.
\bibitem[{Attias and Hanneke(2022)}]{Attias2022b}
\bibinfo{author}{Attias, I.}, \bibinfo{author}{Hanneke, S.},
  \bibinfo{year}{2022}.
\newblock \bibinfo{title}{Adversarially robust learning of real-valued
  functions}.
\newblock \bibinfo{journal}{arXiv preprint arXiv:2206.12977} .
\bibitem[{Attias et~al.(2022)Attias, Kontorovich and Mansour}]{Attias2022a}
\bibinfo{author}{Attias, I.}, \bibinfo{author}{Kontorovich, A.},
  \bibinfo{author}{Mansour, Y.}, \bibinfo{year}{2022}.
\newblock \bibinfo{title}{Improved generalization bounds for adversarially
  robust learning}.
\newblock \bibinfo{journal}{The Journal of Machine Learning Research}
  \bibinfo{volume}{23}, \bibinfo{pages}{7897--7927}.
\bibitem[{Bartlett and Long(1995)}]{Bartlett1995}
\bibinfo{author}{Bartlett, P.L.}, \bibinfo{author}{Long, P.M.},
  \bibinfo{year}{1995}.
\newblock \bibinfo{title}{More theorems about scale-sensitive dimensions and
  learning}, in: \bibinfo{booktitle}{Proceedings of the eighth annual
  conference on Computational learning theory}, pp. \bibinfo{pages}{392--401}.
\bibitem[{Bartlett and Long(1998)}]{Bartlett1998}
\bibinfo{author}{Bartlett, P.L.}, \bibinfo{author}{Long, P.M.},
  \bibinfo{year}{1998}.
\newblock \bibinfo{title}{Prediction, learning, uniform convergence, and
  scale-sensitive dimensions}.
\newblock \bibinfo{journal}{Journal of Computer and System Sciences}
  \bibinfo{volume}{56}, \bibinfo{pages}{174--190}.
\bibitem[{Belkin(2018)}]{Belkin2018}
\bibinfo{author}{Belkin, M.}, \bibinfo{year}{2018}.
\newblock \bibinfo{title}{Approximation beats concentration? {A}n approximation
  view on inference with smooth radial kernels}, in:
  \bibinfo{booktitle}{Conference On Learning Theory}, pp.
  \bibinfo{pages}{1348--1361}.
\bibitem[{Ben-David(2015)}]{Ben-David2015}
\bibinfo{author}{Ben-David, S.}, \bibinfo{year}{2015}.
\newblock \bibinfo{title}{2 notes on classes with {V}apnik-{C}hervonenkis
  dimension 1}.
\newblock \bibinfo{journal}{arXiv preprint arXiv:1507.05307} .
\bibitem[{Ben-David et~al.(1992)Ben-David, Cesa-Bianchi and
  Long}]{BenDavid1992}
\bibinfo{author}{Ben-David, S.}, \bibinfo{author}{Cesa-Bianchi, N.},
  \bibinfo{author}{Long, P.M.}, \bibinfo{year}{1992}.
\newblock \bibinfo{title}{Characterizations of learnability for classes of
  $\{0,\dots, n\}$-valued functions}, in: \bibinfo{booktitle}{Proceedings of
  the fifth annual workshop on Computational learning theory}, pp.
  \bibinfo{pages}{333--340}.
\bibitem[{Dudley(1984)}]{Dudley1984}
\bibinfo{author}{Dudley, R.M.}, \bibinfo{year}{1984}.
\newblock \bibinfo{title}{A course on empirical processes}, in:
  \bibinfo{editor}{Hennequin, P.L.} (Ed.), \bibinfo{booktitle}{{\'E}cole
  d'{\'E}t{\'e} de Probabilit{\'e}s de Saint-Flour XII - 1982},
  \bibinfo{publisher}{Springer Berlin Heidelberg}. pp. \bibinfo{pages}{1--142}.
\bibitem[{Hoeffding(1963)}]{Hoeffding1963}
\bibinfo{author}{Hoeffding, W.}, \bibinfo{year}{1963}.
\newblock \bibinfo{title}{Probability inequalities for sums of bounded random
  variables}.
\newblock \bibinfo{journal}{Journal of the American Statistical Association}
  \bibinfo{volume}{58}, \bibinfo{pages}{13--30}.
\bibitem[{Hu et~al.(2022)Hu, Peale and Reingold}]{Hu2022}
\bibinfo{author}{Hu, L.}, \bibinfo{author}{Peale, C.},
  \bibinfo{author}{Reingold, O.}, \bibinfo{year}{2022}.
\newblock \bibinfo{title}{Metric entropy duality and the sample complexity of
  outcome indistinguishability}, in: \bibinfo{booktitle}{Proceedings of The
  33rd International Conference on Algorithmic Learning Theory}, pp.
  \bibinfo{pages}{515--552}.
\bibitem[{Kearns and Schapire(1994)}]{Kearns1994}
\bibinfo{author}{Kearns, M.J.}, \bibinfo{author}{Schapire, R.E.},
  \bibinfo{year}{1994}.
\newblock \bibinfo{title}{Efficient distribution-free learning of probabilistic
  concepts}.
\newblock \bibinfo{journal}{Journal of Computer and System Sciences}
  \bibinfo{volume}{48}, \bibinfo{pages}{464--497}.
\bibitem[{Pollard(1986)}]{Pollard1986}
\bibinfo{author}{Pollard, D.}, \bibinfo{year}{1986}.
\newblock \bibinfo{title}{Rates of uniform almost-sure convergence for
  empirical processes indexed by unbounded classes of functions}.
\bibitem[{Rudelson and Vershynin(2006)}]{Rudelson2006}
\bibinfo{author}{Rudelson, M.}, \bibinfo{author}{Vershynin, R.},
  \bibinfo{year}{2006}.
\newblock \bibinfo{title}{Combinatorics of random processes and sections of
  convex bodies}.
\newblock \bibinfo{journal}{Annals of Mathematics} \bibinfo{volume}{164},
  \bibinfo{pages}{603--648}.
\bibitem[{Talagrand(1994)}]{Talagrand1994}
\bibinfo{author}{Talagrand, M.}, \bibinfo{year}{1994}.
\newblock \bibinfo{title}{Sharper bounds for gaussian and empirical processes}.
\newblock \bibinfo{journal}{The Annals of Probability} \bibinfo{volume}{22},
  \bibinfo{pages}{28--76}.
\bibitem[{Vapnik(1968)}]{Vapnik1968}
\bibinfo{author}{Vapnik, V.}, \bibinfo{year}{1968}.
\newblock \bibinfo{title}{On the uniform convergence of relative frequencies of
  events to their probabilities}, in: \bibinfo{booktitle}{Doklady Akademii Nauk
  USSR}, pp. \bibinfo{pages}{781--787}.
\bibitem[{Vapnik and Chervonenkis(1971)}]{Vapnik1971}
\bibinfo{author}{Vapnik, V.N.}, \bibinfo{author}{Chervonenkis, A.Y.},
  \bibinfo{year}{1971}.
\newblock \bibinfo{title}{On the uniform convergence of relative frequencies of
  events to their probabilities}.
\newblock \bibinfo{journal}{Theory of Probability \& Its Applications}
  \bibinfo{volume}{16}, \bibinfo{pages}{264--280}.

\end{thebibliography}

\end{document}